%% file: main.tex
\documentclass{article}

\usepackage{iclr2027_conference,times}
\input{math_commands.tex}

\usepackage{amsmath,amssymb,amsfonts,amsthm,mathtools}
\usepackage{bm}
\usepackage{booktabs}
\usepackage{graphicx}
\graphicspath{{./}}
\usepackage{multirow}
\usepackage{array}
\usepackage[table]{xcolor}
\usepackage{url}
\usepackage{hyperref}
\usepackage{cleveref}
\usepackage{algorithm}
\usepackage{algpseudocode}
\usepackage{subcaption}
\usepackage{microtype}

\newtheorem{theorem}{Theorem}
\newtheorem{proposition}{Proposition}

\DeclareRobustCommand{\method}{\mbox{\textsc{ART-MPG}}}
\newcommand{\methodmath}{\mathrm{ART\text{-}MPG}}
\newcommand{\ind}{\mathbf{1}}

\title{\method: Autoregressive R/T Manifold Path Guiding}

\author{
Zehao Zhao\textsuperscript{1},
Yanyang Yang\textsuperscript{2},
Xiaole Zhang\textsuperscript{1},
Xiyu Deng\textsuperscript{3,}\thanks{Corresponding author: \texttt{xiyud@andrew.cmu.edu}.} \\
{\small
\textsuperscript{1}Ming Hsieh Department of Electrical and Computer Engineering, University of Southern California}\\
{\small
\textsuperscript{2}Computational Science and Engineering, Georgia Institute of Technology}\\
{\small
\textsuperscript{3}Department of Electrical and Computer Engineering, Carnegie Mellon University}
}

\iclrfinalcopy

\begin{document}

\maketitle
\lhead{Preprint}

\begin{abstract}
\input{sections/abstract}
\end{abstract}

\input{sections/introduction}
\input{sections/related_work}
\input{sections/background}
\input{sections/problem_formulation}
\input{sections/algorithm_design}

\input{sections/experiments}

\input{sections/limitations}
\input{sections/conclusion}
\input{sections/reproducibility}

\bibliography{references}
\bibliographystyle{iclr2027_conference}

\appendix
\input{sections/appendix}

\end{document}

%% file: math_commands.tex
\usepackage{amsmath,amsfonts,bm}

\def\eqref#1{equation~\ref{#1}}

\def\1{\bm{1}}

\DeclareMathAlphabet{\mathsfit}{\encodingdefault}{\sfdefault}{m}{sl}
\SetMathAlphabet{\mathsfit}{bold}{\encodingdefault}{\sfdefault}{bx}{n}



%% file: sections/abstract.tex
Specular chains concentrate important transport in small regions of path space and the structure depends on chain topology.
We introduce \method{}, a compact neural model that samples chain length, an autoregressive
reflection/transmission (R/T) string, and a direction from a topology-conditioned four-component
von Mises--Fisher (vMF) mixture.  
We extend the repeated trial weighting in Manifold Path Guiding (MPG) by conditioning retry sampling on both chain length and the sampled R/T string, accounting for its known mass outside the stochastic re-hit trials. For a fixed
proposal and target chain, this preserves the ideal expectation while reducing the expected retry count and reciprocal-estimator
variance compared with retries conditioned on chain length alone.
We distinguish the formal unbiasedness result from the capped and clamped renderer used in experiments.
We evaluate fourteen scenes across a range of samples per pixel (SPP) against existing online and offline rendering algorithms. Our method consistently produces lower mean squared error at equal SPP than the online and offline baselines, with lower standard deviation across runs.

%% file: sections/introduction.tex
\section{Introduction}
\label{sec:introduction}

Specular transport produces visually important paths that occupy narrow,
structured regions of path space. Ordinary path tracing can miss these paths,
while samples that find them may carry substantial energy.  Manifold methods
address the geometric difficulty by constructing an approximate seed chain and
projecting it onto the manifold defined by reflection and refraction constraints
\citep{jakob2012manifold,hanika2015mnee,zeltner2020specular}.

Manifold Path Guiding (MPG) learns from previously found admissible chains. It
factorizes the seed proposal into a distribution over chain lengths, a conditional distribution over complete reflection/transmission (R/T) strings, and
a type-conditioned directional von Mises--Fisher (vMF) mixture, reconstructing these distributions
online from local path records~\citep{fan2023manifold_path_guiding}. This is a
powerful representation, but its complete-type tables grow with the number of
R/T strings, its directional mixture grows with local records, and its
repeated-trial estimator factors out only the known length probability.

While diffusion has long been used to generate 3D space representations~\citep{zhang2026hilbertgen3dhilbertmultifractalconditioningtopologyaware3d}, its application in the light rendering problem is still understudied. We introduce \method{} (autoregressive R/T manifold path guiding), a compact
neural amortization of this MPG pipeline. A prefix model assigns an exactly evaluable probability mass
to each ordered R/T string, and a fixed four-component vMF head predicts the
direction density conditioned on the completed descriptor. The exact type
mass can then be divided out before re-hit estimation. Under a common mixed
proposal and a fixed target chain, conditioning on both length and type, rather than
length alone, preserves the ideal expectation while reducing the variance of the reciprocal estimator. Under the same finite retry cap, it
also cannot increase the timeout probability. 
Our theoretical contribution is the comparison between conditioning retries on chain length alone and conditioning them on both chain length and the R/T string, rather than the established mixed factorization, vMF family, or geometric re-hit identity.

The many-to-one nature of manifold projection remains central: a pointwise directional
density is not the probability of producing a projected chain. We therefore
separate three objects throughout the paper: the normalized neural seed
proposal, the ideal descriptor-conditioned chain estimator inherited from the
Specular Manifold Sampling (SMS)/MPG re-hit lineage, and the finite-work capped/clamped renderer actually
evaluated. This distinction makes explicit how the theoretical guarantees relate to the evaluated implementation.

Our contributions are:
\begin{enumerate}
    \item We introduce a mathematical formulation for modeling
  reflection/transmission sequences in specular transport, coupled
  with a directional proposal conditioned on the sampled chain
  length and complete R/T sequence (Section~\ref{sec:mpgd-proposal}, ~\ref{sec:mpgd-training})
    \item We develop a descriptor-conditioned re-hit estimator that
  factors out the exactly evaluable length-and-type probability.
  For a common proposal and a fixed target chain, we prove that
  this construction preserves the ideal expectation and reduces
  reciprocal-estimator variance whenever the target type probability
  is less than one, with equality otherwise
  (Section~\ref{sec:mpgd-chain-space}, Equation~\ref{eq:mpgd-unbiased-estimator}, and Proposition~\ref{prop:type-stratification}).
\end{enumerate}

%% file: sections/related_work.tex
\section{Related Work}
\label{sec:related-work}

\paragraph{Path guiding and learned proposals.}
Classical path guiding fits spatially varying directional distributions from
transport samples, including parametric mixtures, product-guided densities,
and parallax- or variance-aware models~\citep{vorba2014online,
herholz2016product,muller2017practical,ruppert2020robust,rath2020variance}.
Neural approaches instead amortize sampleable densities with compact networks
or invertible mappings~\citep{muller2019neural,zheng2019primary,bako2019offline,
dong2023npm}.  These works motivate our normalized neural proposal, but ordinary
directional guiding does not by itself account for discrete specular topology
or the many-to-one manifold projection.

\paragraph{Specular transport.}
Manifold exploration, MNEE, and SMS construct paths by solving reflection and
refraction constraints~\citep{jakob2012manifold,hanika2015mnee,
zeltner2020specular}.  SMS also supplies the repeated-reconstruction lineage
behind geometric reciprocal estimation.  Deterministic root-finding methods,
including Specular Polynomials, offer a complementary route to discovering
admissible chains~\citep{fan2024specular}; our work addresses proposal learning
for the manifold-walk pipeline rather than replacing its solver.

\paragraph{Relation to MPG.}
MPG already decomposes a seed law into length, complete R/T type, and a
type-conditioned directional vMF mixture.  It reconstructs these factors
online from local path records and, in its Eq.~15, holds length fixed during
SMS-style re-hit trials~\citep{fan2023manifold_path_guiding}.  \method{} does
not claim the mixed factorization, vMF family, or generic re-hit identity as
new.  It neurally amortizes MPG with a prefix-autoregressive type model and a
fixed-size four-component vMF head, then extends the exact discrete factor in the retry
estimator from length to length plus ordered type.  The latter change removes
the type probability from stochastic re-hit trials, as quantified in
Proposition~\ref{prop:type-stratification}.

%% file: sections/background.tex
\section{Background and Preliminaries}
\label{sec:background}

\paragraph{Monte Carlo light transport.}
For a surface point $y\in\mathcal M\subset\mathbb R^3$ and a unit outgoing direction $\omega_o\in\mathbb S^2$, the rendering
equation writes outgoing radiance as~\citep{kajiya1986rendering}
\begin{equation*}
  L_o(y,\omega_o)=L_e(y,\omega_o)+
  \int_{\mathcal H_y^2}
  f_s(y,\omega_i,\omega_o)L_i(y,\omega_i)
  |\cos\theta_i|\,\mathrm d\omega_i .
\end{equation*}
Here, $\mathcal M$ denotes the scene surfaces, $\mathbb S^2=\{\omega\in\mathbb R^3:\|\omega\|_2=1\}$ is the
unit sphere, and $\mathcal H_y^2$ is the hemisphere above the surface at $y$. The terms $L_e$, $L_i$, and $f_s$ denote emitted radiance, incident
radiance, and the surface scattering function, respectively;
$\theta_i$ is the angle between the incident direction $\omega_i$
and the surface normal.
If $\omega_i\sim p(\cdot\mid y,\omega_o)$, a one-sample estimator of the
integral is the integrand divided by $p(\omega_i\mid y,\omega_o)$, where $p$ is a density with respect to solid angle and is positive
wherever the integrand is nonzero.
Consequently, a proposal that places probability near high-contribution
directions can reduce variance; one that undersamples a narrow contribution
region produces rare, high-weight samples.  Path guiding learns such
proposals from transport observations~\citep{vorba2014online,
muller2017practical,ruppert2020robust}.  In our setting, a seed direction initializes a nonlinear manifold solve, and multiple seeds can converge to the same final path. The probability of reconstructing that path therefore depends on the total probability mass of those seeds, rather than the pointwise density of the particular seed that found it.

\paragraph{Specular chains and separator vertices.}
Following manifold path sampling and MPG~\citep{jakob2012manifold,
fan2023manifold_path_guiding}, we decompose a light path at non-specular
\emph{separator} vertices.  Between two consecutive separators, a specular chain contains
ideal reflection or refraction vertices jointly constrained by geometry and
Snell's law.  
We describe its discrete configuration and initial seed direction by
\begin{equation*}
  d=(n,t_{1:n}),\qquad t_i\in\{\mathrm R,\mathrm T\},\qquad
  \omega\in\mathbb S^2
\end{equation*}
where $n$ is the number of specular vertices, $t_{1:n}$ is their ordered
R/T strings, and $\omega$ is the initial seed
direction. Sampling $n$, $t_{1:n}$, and $\omega$ constructs an initial path;
a manifold walk then adjusts its vertices to satisfy the coupled constraints
or reports failure~\citep{hanika2015mnee,zeltner2020specular}.  The resulting
chain contributes the usual product of endpoint factors, scattering terms,
geometry terms, and transmittance.  The challenge is therefore not merely to
predict a bright direction, but to allocate probability across disconnected
length-and-type strata before predicting a direction within the selected
stratum.

\paragraph{Projection basins and chain probability.}
For fixed outer renderer state $z$, let $\Phi_z(d,\omega)$ denote the
deterministic manifold solve.  The map from seeds to admissible chains is
many-to-one: for a given descriptor $d$, seed directions within an angular \emph{convergence basin} can project to the same
canonical chain.  If $q(d,\omega)$ is the mixed seed proposal, the probability of a chain is the integral of $q$ over its preimage under $\Phi_z$, not the pointwise density of the particular seed that found it. SMS estimates the reciprocal basin mass by repeated reconstruction, while MPG factors known discrete probability mass from this trial process~\citep{zeltner2020specular, fan2023manifold_path_guiding}. Section~\ref{sec:problem-formulation} formalizes this chain-space probability; Section~\ref{sec:mpgd-chain-space} describes the length-and-type stratification used by \method{}.

\paragraph{Directional proposals on the sphere.}
For a unit mean direction $\mu\in\mathbb S^2$ and concentration $\kappa\ge0$,
the vMF density on $\mathbb S^2$ is
\begin{equation}
  \operatorname{vMF}(\omega;\mu,\kappa)
  =\frac{\kappa}{4\pi\sinh\kappa}
   \exp(\kappa\mu^\top\omega),
  \label{eq:bg-vmf}
\end{equation}
with the $\kappa\rightarrow0$ limit equal to the uniform spherical density $1/(4\pi)$.
A mixture $\sum_j\rho_j\operatorname{vMF}(\omega;\mu_j,\kappa_j)$ represents multiple directional modes while retaining a normalized, 
analytically evaluable, and directly sampleable density. Neural path guiding, NPM, and MPG use vMF mixtures for spherical transport proposals~\citep{ruppert2020robust,
dong2023npm,fan2023manifold_path_guiding}.  \method{} predicts a four-component mixture conditioned on the complete sampled R/T descriptor.  Numerical conventions and exact sampling details are provided in Appendix~\ref{app:vmf-details}.

\paragraph{Discrete diffusion for R/T sequences.}
For a chain of length $n$, the reflection/transmission sequence
$t_{1:n}\in\{\mathrm R,\mathrm T\}^n$ specifies the interaction type
at each specular vertex. Discrete diffusion models such sequences
through a forward corruption process and a learned reverse process~\citep{xu2026polynomial, xu2024provably, banerjee2025adaptive, hoshinotraining}.
The forward process progressively corrupts the categorical tokens,
for example by masking or replacing them. Conditioned on the
corrupted sequence, the diffusion step, and the transport context,
a denoising network predicts probabilities for the R/T tokens.
Generation proceeds through successive reverse steps to obtain
a complete R/T sequence, which can then condition the directional
proposal used to initialize a specular chain.

%% file: sections/problem_formulation.tex
\section{Problem Formulation}
\label{sec:problem-formulation}

Let $z$ denote the renderer state before seed sampling, including the
separator vertices, the sampled emitter point, and the outcomes of preceding
random choices. The network takes the corresponding features $x=\psi(z)$
as input. Our analysis assumes that $z$ remains fixed during the initial
solve and all subsequent trials. Seed-dependent sampling of glossy offsets falls outside these assumptions, as discussed in \Cref{app:ideal-estimator}. 
Let $\mathcal A(z)$ denote the set of admissible specular chains with at
most $N_{\max}$ vertices, and let $F_z(a)$ be the RGB contribution of
chain $a$ under state $z$. We seek to estimate
\begin{equation*}
    I(z)=\sum_{a\in\mathcal A(z)}F_z(a).
\end{equation*}
We sample seeds from a learned distribution and obtain admissible chains through the manifold solver. Each chain's contribution is weighted by an estimate of its reciprocal sampling probability. All identities and expectations below apply componentwise to the RGB channels.

A seed consists of a descriptor $d=(n,t_{1:n})$ and an initial direction
$\omega\in\mathbb S^2$. The seed space is
\begin{equation*}
    \mathcal Y=
    \bigsqcup_{n=1}^{N_{\max}}
    \bigsqcup_{t_{1:n}\in\{\mathrm R,\mathrm T\}^{n}}
    \bigl(\{(n,t_{1:n})\}\times\mathbb S^2\bigr).
\end{equation*}
Conditioned on $x$, we sample seeds from the normalized proposal
\begin{equation}
    q_\theta(d,\omega\mid x)
    =q_\theta(n\mid x)
     \prod_{i=1}^{n}q_\theta(t_i\mid t_{<i},n,x)
     g_\theta(\omega\mid d,x).
    \label{eq:problem-factorization}
\end{equation}
This follows MPG's length--type--direction factorization. We parameterize
the distributions with a neural network and model the R/T sequence
autoregressively.

Let $\Phi_z(d,\omega)$ denote the output of the deterministic manifold
solver. For the estimator, we define
\begin{equation*}
    \bar\Phi_z(d,\omega)=
    \begin{cases}
        a,&\Phi_z(d,\omega)=a\ \text{and}\ d(a)=d,\\
        \varnothing,&\text{otherwise}.
    \end{cases}
\end{equation*}
The outcome $\varnothing$ contributes zero to the estimator. It includes
both unsuccessful solves and solutions whose descriptors differ from
the sampled descriptor. Let $\nu$ be the product of counting measure over descriptors and surface measure on $\mathbb S^2$. The probability of obtaining chain $a$ is
\begin{equation*}
    Q_\theta(a\mid z)
    =\int_{\bar\Phi_z^{-1}(\{a\})}
    q_\theta(d,\omega\mid\psi(z))\,\mathrm d\nu(d,\omega).
\end{equation*}
The integral covers all seeds mapped to $a$ by $\bar\Phi_z$.
Evaluating the proposal density at a single seed does not give
this probability.

We train the network offline using a fixed dataset of chains obtained
by the manifold solver,
\begin{equation*}
  \mathcal R=\{(x_b,n_b,t_{b,1:n_b},\omega_b,e_b)\}_{b=1}^{B},
\end{equation*}
where $\omega_b$ is the unit direction from the receiver to the first vertex of the converged chain, and $e_b\ge0$ is a recorded scalar energy proxy. \Cref{app:implementation-details} describes the data collection procedure and the definition of $e_b$. Neither the target direction $\omega_b$ nor data from reference images are included in $x_b$. The network parameters remain fixed during rendering. Data collection and training costs are excluded from the comparison at equal SPP.

We assume that every contributing chain has positive sampling probability and that $\sum_{a\in\mathcal A(z)}\|F_z(a)\|<\infty$.
The analysis uses independent trials under a fixed state $z$ and an exact test for whether two solutions represent the same chain.
Appendix~\ref{app:ideal-estimator} gives the full assumptions and the proof of unbiasedness. Section~\ref{sec:mpgd-practical} describes the implementation used in the experiments.

%% file: sections/algorithm_design.tex
\section{Algorithm Design}
\label{sec:algorithm-design}

\method{} follows MPG's factorization into chain length, R/T sequence,
and direction, with two changes. First, we replace MPG's local tables
of complete R/T sequences and directional kernels centered on nearby
samples with a neural network of fixed size. The network models the
R/T sequence autoregressively and predicts a directional distribution
conditioned on the chain length and complete sequence. Second, MPG
holds only the chain length fixed during repeated trials and resamples
both the R/T sequence and direction. We hold both the length and sequence
fixed and resample only the direction. The resulting estimate of the
reciprocal conditional probability is divided by the exact joint
probability of the length and sequence.

\subsection{Autoregressive descriptor and directional density}
\label{sec:mpgd-proposal}

From Equation~\ref{eq:problem-factorization}, the probability of a descriptor $d=(n,t_{1:n})$ is
\begin{equation*}
  \pi_\theta(d\mid x)
  =q_\theta(n\mid x)\prod_{i=1}^{n}
    q_\theta(t_i\mid t_{<i},n,x).
\end{equation*}
The joint proposal can therefore be written as $q_\theta(d,\omega\mid x)=\pi_\theta(d\mid x)
g_\theta(\omega\mid d,x)$.
Sampling starts with all descriptor tokens masked. The network first
predicts the distribution over chain lengths. After sampling $n$, it
generates the R/T sequence in order, conditioning each prediction on
the preceding symbols. The descriptor probability $\pi_\theta(d\mid x)$ is evaluated exactly from the length probability and the conditional probabilities used during sampling.

Once the descriptor is complete, the network predicts a mixture of the
vMF densities defined in Equation~\ref{eq:bg-vmf}:
\begin{equation*}
    g_\theta(\omega\mid d,x)
    =\sum_{j=1}^{K}\rho_j
     \operatorname{vMF}(\omega;\mu_j,\kappa_j).
\end{equation*}
The number of components $K$ is fixed. The weights satisfy
$\rho_j\ge0$ and $\sum_j\rho_j=1$, the mean directions satisfy
$\mu_j\in\mathbb S^2$, and the concentrations satisfy $\kappa_j\ge0$.
All mixture parameters depend on $d$ and $x$.

MPG also conditions its directional mixtures on the chain type, but
constructs them from nearby records. Our network predicts the mixture
parameters directly from the descriptor and scene features.
Appendix~\ref{app:vmf-details} gives the numerical details of density
evaluation and sampling. The network architecture and the encoding
of features and descriptor tokens are described in
Appendix~\ref{app:implementation-details}.

\subsection{Offline objective}
\label{sec:mpgd-training}

Training minimizes the negative log likelihood of the chain length,
the R/T sequence, and the direction. For the R/T sequence, we sample
a prefix uniformly and multiply the loss for predicting the next symbol by the sequence length, giving an unbiased estimate of the loss over the complete sequence. Each record is weighted using a logarithmic transformation of its recorded energy to moderate the influence of large values. The training records are collected by the manifold integrator using online MPG, with each direction target pointing from the receiver to the first vertex of the converged chain. Reference images are not used for training. Appendix~\ref{app:implementation-details} provides the full objective, the definition of the recorded energy, and the training procedure. The trained model remains fixed during evaluation.

\subsection{Factoring type mass from re-hit trials}
\label{sec:mpgd-chain-space}

For a chain $a$ with descriptor $d(a)$, the probability of obtaining $a$ when its descriptor is held fixed is
\begin{equation*}
    R_\theta(a\mid d(a),z)
    =
    \int_{\mathbb S^2}
    \ind\{\bar\Phi_z(d(a),\omega)=a\}
    g_\theta(\omega\mid d(a),\psi(z))\,\mathrm d\omega.
\end{equation*}
Writing $d(a)=(n,t_{1:n})$ and $x=\psi(z)$, the chain probability is
$Q_\theta(a\mid z)=q_Nq_TR_\theta$, where
\begin{equation*}
    q_N=q_\theta(n\mid x),
    \qquad
    q_T=\prod_{i=1}^{n}q_\theta(t_i\mid t_{<i},n,x).
\end{equation*}
Both $q_N$ and $q_T$ can be evaluated exactly. SMS estimates reciprocal probabilities by counting independent trials until the same chain is obtained again. MPG evaluates $q_N$ separately and holds the length fixed while resampling the R/T sequence and direction on each trial \citep[Eq.~15]{fan2023manifold_path_guiding}.
We also hold the R/T sequence fixed and resample only the direction.
The trial count estimates $\frac{1}{R_\theta}$. Dividing this count by $q_Nq_T$ gives an estimate of $\frac{1}{Q_\theta(a\mid z)}$.

\begin{proposition}[Effect of fixing the R/T sequence]
\label{prop:type-stratification}
Fix a chain $a$ and state $z$, and use the same proposal and solver in
both constructions. Let $q_T\in(0,1]$ be the conditional probability
of its R/T sequence and let
$R=R_\theta(a\mid d(a),z)\in(0,1]$.
With independent trials and exact comparison of chain solutions,
the number of trials required to obtain $a$ satisfies $J_n\sim\operatorname{Geom}(q_TR)$ and $J_d\sim\operatorname{Geom}(R),$
when only the length is fixed and when the complete descriptor is fixed,
respectively. Both counts include the successful trial. Then
\begin{align*}
  &\mathbb{E}[J_d]=1/R\le 1/(q_TR)=\mathbb E[J_n],\\
  &\operatorname{Var}(J_n)-\operatorname{Var}(J_d/q_T)
    =\frac{1-q_T}{q_T^2R}\ge0.
\end{align*}
For any nonnegative integer trial limit $M$,
\begin{equation*}
    \Pr[J_d>M]
    =(1-R)^M
    \le(1-q_TR)^M
    =\Pr[J_n>M].
\end{equation*}
\end{proposition}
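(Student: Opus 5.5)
The plan is to first justify the two geometric laws and then read off every claim from standard moments and tail formulas of the geometric distribution. Throughout, $\operatorname{Geom}(p)$ denotes the number of independent Bernoulli$(p)$ trials up to and including the first success. Then
\begin{equation*}
\mathbb E[J]=1/p,\qquad \operatorname{Var}(J)=(1-p)/p^2,\qquad \Pr[J>M]=(1-p)^M .
\end{equation*}

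\textbf{Distributional claims.} With the descriptor $d(a)$ fixed, each trial draws $\omega\sim g_\theta(\cdot\mid d(a),x)$ independently. It succeeds exactly when $\bar\Phi_z(d(a),\omega)=a$, which by the definition of $R_\theta$ has probability $R$. Exact comparison of solutions means success is correctly detected, so $J_d\sim\operatorname{Geom}(R)$.

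With only $n$ fixed, each trial draws $t_{1:n}$ autoregressively and then $\omega$. A trial can yield $a$ only if the sampled string equals $t_{1:n}(a)$, because $\bar\Phi_z$ returns $\varnothing$ on a descriptor mismatch. That event has probability $q_T$. Conditional on it, success has probability $R$, so the per-trial success probability is $q_TR$ and $J_n\sim\operatorname{Geom}(q_TR)$. Independence across trials holds because $z$ is fixed and the trials are drawn independently.

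\textbf{Expectation and tail bounds.} Since $q_T\in(0,1]$, we have $q_TR\le R$. This gives $\mathbb E[J_d]=1/R\le 1/(q_TR)=\mathbb E[J_n]$. The tail formula gives $(1-R)^M\le(1-q_TR)^M$ for every integer $M\ge0$, since $t\mapsto t^M$ is nondecreasing on $[0,1]$. The case $M=0$ is trivial, with both sides equal to $1$.

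\textbf{Variance identity.} We have
\begin{equation*}
\operatorname{Var}(J_d/q_T)=\frac{1-R}{q_T^2R^2},
\qquad
\operatorname{Var}(J_n)=\frac{1-q_TR}{q_T^2R^2}.
\end{equation*}
Subtracting gives
\begin{equation*}
\operatorname{Var}(J_n)-\operatorname{Var}(J_d/q_T)=\frac{R-q_TR}{q_T^2R^2}=\frac{1-q_T}{q_T^2R}\ge0 .
\end{equation*}
This difference is zero exactly when $q_T=1$.

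It is worth remarking why comparing $J_n$ with $J_d/q_T$ is the right comparison. Both are unbiased for $1/(q_TR)$, so after dividing by the common factor $q_N$ both estimate $1/Q_\theta(a\mid z)$. The variance comparison is therefore between two estimators of the same quantity.

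\textbf{Main obstacle.} None of the steps is computationally hard. The only delicate point is justifying the per-trial success probability $q_TR$ for the length-fixed scheme. This rests on two facts: a mismatched string maps to $\varnothing$ by the definition of $\bar\Phi_z$, and the direction law conditional on the matching string is exactly $g_\theta(\cdot\mid d(a),x)$. Both follow from the factorization in Equation~\ref{eq:problem-factorization}, and I would state them explicitly.
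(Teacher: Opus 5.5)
Your proposal is correct and follows the paper's own argument, which reads off every claim directly from $\mathbb E[J]=1/p$, $\operatorname{Var}(J)=(1-p)/p^2$, and $\Pr[J>M]=(1-p)^M$ for $J\sim\operatorname{Geom}(p)$; your variance subtraction and tail-monotonicity steps are exactly this. You also derive the per-trial success probabilities $q_TR$ and $R$ from the definitions of $\bar\Phi_z$ and $R_\theta$, which goes slightly beyond the paper (it takes the two geometric laws as given in the statement), but this is consistent with the paper's setup.
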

These results follow from
$\mathbb E[J]=1/p$,$\operatorname{Var}(J)=(1-p)/p^2$, and $\Pr[J>M]=(1-p)^M$ for $J\sim\operatorname{Geom}(p)$. To estimate $I(z)$, draw
$D\sim\pi_\theta(\cdot\mid x)$ and $\Omega_0\sim g_\theta(\cdot\mid D,x)$, and set $A=\bar\Phi_z(D,\Omega_0)$. If $A\ne\varnothing$, keep $D$ and $z$ fixed and draw new directions independently until the solver returns the same chain $A$. Let $J_d$ count these additional trials, including the successful trial. The estimator is
\begin{equation}
    \widehat I_{\methodmath}
    =
    \begin{cases}
        0,&A=\varnothing,\\
        F_z(A)J_d/(q_Nq_T),&A\ne\varnothing.
    \end{cases}
    \label{eq:mpgd-unbiased-estimator}
\end{equation}
Under the assumptions stated in Appendix~\ref{app:ideal-estimator},
this estimator is unbiased for $I(z)$. The proof is given in
Appendix~\ref{app:ideal-estimator} and follows the repeated trial
argument of SMS and MPG \citep{zeltner2020specular,fan2023manifold_path_guiding}.

\subsection{Finite-work renderer}
\label{sec:mpgd-practical}

The renderer starts from state $z_0$, samples a descriptor
$D=(N,T_{1:N})$, and draws a direction from the corresponding vMF mixture. The manifold solver then produces a chain $A$, subject to the descriptor condition defined above. The initial solve also determines a state $z_\eta$ containing the sampled offset normals. Both $z_\eta$ and $D$ remain fixed during subsequent trials, which sample new directions independently. Two solutions are treated as the same chain when their shape sequences and R/T sequences agree and their first segment directions agree within a numerical tolerance. Let $J_{\rm impl}$ denote the number of trials required to obtain a match without a trial limit, including the successful trial. We set $J_{\rm impl}=\infty$ if the initial solve fails. This numerical comparison differs from the exact equality assumed in the definition of $J_d$.

Before applying the weights for chain sampling, the contribution is
$H_{z_\eta}(A) =\frac{f_{\rm recv}(z_\eta,A)G_{z_\eta}(A)}{p_{\rm off}(z_\eta,A)},$ where $f_{\rm recv}$ is the receiver BSDF factor, $G_{z_\eta}(A)$ is the remaining chain contribution, and $p_{\rm off}$ is the density of the sampled offset normals. Samples with invalid offset densities contribute zero. For finite nonnegative contributions, the implementation computes
\begin{equation}
    \widehat I_{\mathrm{fw}}
    =
    \begin{cases}
        \displaystyle
        \frac{
            \operatorname{clip}^{\mathrm{peak}}_{C_f}
            \left(
                H_{z_\eta}(A)
                \min\left\{C_w,\frac{J_{\rm impl}}{q_T}\right\}
            \right)
        }{
            \max\{\varepsilon_N,q_N\}
        },
        & A\ne\varnothing,\ J_{\rm impl}\le M,\\[1ex]
        0,&\text{otherwise}.
    \end{cases}
    \label{eq:finite-work-return}
\end{equation}
Here, $M$ limits the number of trials, $C_w$ bounds $J_{\rm impl}/q_T$,
and $\varepsilon_N$ is a positive lower bound applied to the length
probability. The operator $\operatorname{clip}^{\mathrm{peak}}_{C_f}$
rescales all RGB channels by the same factor when the largest channel
exceeds $C_f$. This rescaling precedes division by
$\max\{\varepsilon_N,q_N\}$. The pseudocode below gives the sampling
and weighting procedure. Appendix~\ref{app:finite-work-boundary}
analyzes the effect of limiting the number of trials.

At an eligible smooth receiver, the renderer evaluates the manifold
contribution alongside ordinary next event estimation and continues
the path through BSDF sampling. The emitter and BSDF strategies are
combined using MIS with the power heuristic~\citep{veach1995optimally}.
The manifold contribution is added separately and is not included in
these MIS weights. Its proposal does not use a defensive mixture.
When the routing option is enabled and manifold sampling has been invoked,
later contributions from finite emitters reached after a caustic caster
are suppressed. Contributions from the environment remain enabled.
The proof in Appendix~\ref{app:ideal-estimator} establishes unbiasedness
of the conditional chain estimator under the stated assumptions.
It does not establish unbiasedness of the complete renderer used
in the experiments.

\begin{algorithm}[H]
\caption{Finite-work \method{} sample used in the experiments}
\label{alg:mpgd-finite-work-main}
\small
\begin{algorithmic}[1]
\Require Pre-seed state $z_0$, features $x=\psi(z_0)$; $M,C_w,C_f$.
\State Sample $D=(N,T_{1:N})\sim\pi_\theta(\cdot\mid x)$; retain $q_N,q_T$.
\State Draw $\Omega_0\sim g_\theta(\cdot\mid D,x)$; solve for
$A,p_{\rm off},z_\eta$.
\If{$A=\varnothing$ or $p_{\rm off}$ is nonfinite or $p_{\rm off}\le10^{-12}$}
  \State \Return $0$
\EndIf
\For{$j=1,\ldots,M$}
  \State Draw $\Omega_j\sim g_\theta(\cdot\mid D,x)$ and solve with frozen $z_\eta$.
  \If{shape sequence, type code, and first-direction root match $A$}
    \State $J_{\rm impl}\gets j$; \textbf{break}
  \EndIf
\EndFor
\If{no match occurred} \State \Return $0$ \EndIf
\State $V\gets H_{z_\eta}(A)\min\{C_w,J_{\rm impl}/q_T\}$.
\If{$V$ is nonfinite or $\max_cV_c<0$} \State \Return $0$ \EndIf
\State Rescale all RGB channels together if $\max_cV_c>C_f$.
\State \Return $V/\max\{10^{-6},q_N\}$.
\end{algorithmic}
\end{algorithm}

%% file: sections/experiments.tex
 \section{Experiments, Results and Discussion}
  \label{sec:experiments}


 \paragraph{Experimental setup.}
  We evaluate fourteen $64\times64$ scene configurations
  at
  $\{12,24,36,48,60\}$ samples per pixel (SPP). Every
  method is compared with
  the same fixed 32,768-SPP numerical reference for each
  scene. For every
  scene--method--SPP cell, we use the same ten fixed
  runs, denoted D01--D10,
  yielding a complete $14\times6\times5\times10=4{,}
  200$-image result matrix.

  We measure full-frame mean squared error (MSE) on raw
  linear
  high-dynamic-range (HDR) RGB values before tone
  mapping. Four-channel OpenEXR
  arrays loaded by OpenCV use blue--green--red--alpha
  (BGRA) channel order. We
  discard alpha, reorder the remaining blue--green--red
  channels to RGB, and do
  not premultiply color by alpha. Images containing
  nonfinite values are
  rejected.

  Curves report the median across independent experiment runs D01--D10. The
  60-SPP annotations
  additionally report the sample standard deviation of
  those ten values using
  denominator $n-1$, while Appendix
  Table~\ref{tab:phase47-combined-mse} also
  reports the arithmetic mean. These statistics describe
  the ten fixed runs and
  are not confidence intervals.

\subsection{Comparisons}
\label{sec:comparison-methods}

The benchmark comprises 14 scenes, \textsc{Living}, \textsc{Lamp}, \textsc{Plane},
\textsc{Slabs}, \textsc{Stone}, \textsc{Pool}, \textsc{Sphere},
\textsc{Water}, \textsc{Ring-and-Slab}, \textsc{Pillars},
\textsc{Slab-$\times1$}, \textsc{Slab-$\times3$},
\textsc{Pillar-$\times3$}, and \textsc{Cornell Glass}.  We compare \method{}
with a direction-only adaptation of Neural Parametric Mixtures
(NPM)~\citep{dong2023npm}; our implementation of Manifold Path Guiding
(MPG)~\citep{fan2023manifold_path_guiding}; a fixed-chain diagnostic based on
Specular Manifold Sampling (SMS)~\citep{zeltner2020specular}; Practical Path
Guiding (PPG)~\citep{muller2017practical}; and unidirectional path tracing
(PT)~\citep{kajiya1986rendering}.  

\begin{figure}[t]
  \centering
  \includegraphics[width=0.98\linewidth]{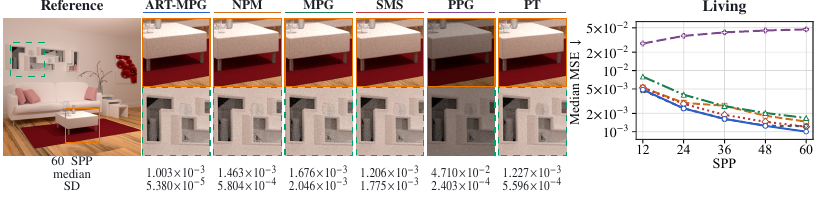}\par\vspace{0.05em}
  \includegraphics[width=0.98\linewidth]{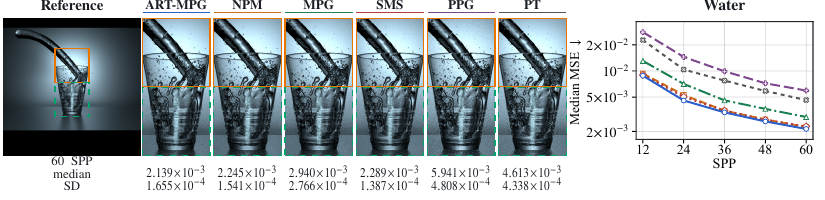}\par\vspace{0.05em}
  \includegraphics[width=0.98\linewidth]{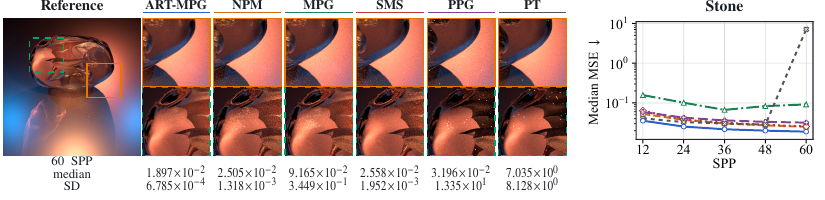}\par\vspace{0.05em}
  \includegraphics[width=0.98\linewidth]{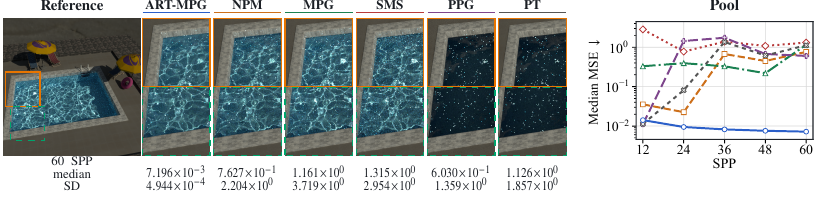}
  \caption{Equal-SPP results for \textsc{Living}, \textsc{Water},
  \textsc{Stone}, and \textsc{Pool}, from top to bottom.  Curves show median
  full-frame MSE across ten fixed runs; 60-SPP annotations additionally show
  sample standard deviation.  Image crops come from one fixed 60-SPP
  visualization run, whereas quantitative values use separate
  $64\times64$ full-frame renders.  The displayed crops are not median images.}
  \label{fig:phase47-median-main-scenes}
\end{figure}

Figure~\ref{fig:phase47-median-main-scenes} shows four representative scenes.
Each strip combines a reference overview and visual-only D01 crops from the
native $1152\times1152$ 60-SPP film with median curves computed from the
separate $64\times64$ quantitative renders.  The displayed crop illustrates
one fixed run and is not a median image.  The remaining ten scene panels appear
in Appendix~\ref{app:phase47-median-panels}.

\subsection{Evaluation}
\label{sec:quantitative-results}

\begin{table}[t]
\centering
\small
\setlength{\tabcolsep}{4.5pt}
\renewcommand{\arraystretch}{1.08}
\caption{Median full-frame linear-RGB MSE across D01--D10 for each scene at 60 SPP and $64\times64$ resolution. Lower is better.}
\label{tab:median-mse}
\resizebox{\linewidth}{!}{%
\begin{tabular}{lcccccc}
\toprule
& \multicolumn{5}{c}{Baselines} & Ours \\
\cmidrule(lr){2-6}\cmidrule(lr){7-7}
Scene & PT & PPG & SMS & MPG & NPM & ART-MPG \\
\midrule
Living
& $1.227\times10^{-3}$
& $4.710\times10^{-2}$
& $1.206\times10^{-3}$
& $1.676\times10^{-3}$
& $1.463\times10^{-3}$
& $\mathbf{1.003\times10^{-3}}$ \\

Lamp
& $4.009\times10^{-2}$
& $3.202\times10^{-2}$
& $6.472\times10^{-3}$
& $1.780\times10^{-3}$
& $2.483\times10^{-3}$
& $\mathbf{1.683\times10^{-3}}$ \\

Plane
& $2.912\times10^{-1}$
& $2.914\times10^{-1}$
& $2.992\times10^{-1}$
& $2.810\times10^{-1}$
& $2.896\times10^{-1}$
& $\mathbf{2.728\times10^{-1}}$ \\

Slabs
& $1.112\times10^{-1}$
& $2.340\times10^{1}$
& $8.095\times10^{-2}$
& $2.823\times10^{-1}$
& $7.957\times10^{-2}$
& $\mathbf{6.479\times10^{-2}}$ \\

Stone
& $7.035\times10^{0}$
& $3.196\times10^{-2}$
& $2.558\times10^{-2}$
& $9.165\times10^{-2}$
& $2.505\times10^{-2}$
& $\mathbf{1.897\times10^{-2}}$ \\

Pool
& $1.126\times10^{0}$
& $6.030\times10^{-1}$
& $1.315\times10^{0}$
& $1.161\times10^{0}$
& $7.627\times10^{-1}$
& $\mathbf{7.196\times10^{-3}}$ \\

Sphere
& $4.603\times10^{1}$
& $1.879\times10^{-1}$
& $1.016\times10^{-1}$
& $1.109\times10^{-1}$
& $8.141\times10^{-2}$
& $\mathbf{7.219\times10^{-2}}$ \\

Water
& $4.613\times10^{-3}$
& $5.941\times10^{-3}$
& $2.289\times10^{-3}$
& $2.940\times10^{-3}$
& $2.245\times10^{-3}$
& $\mathbf{2.139\times10^{-3}}$ \\

Ring--and--Slab
& $2.876\times10^{1}$
& $2.308\times10^{1}$
& $7.039\times10^{-2}$
& $4.163\times10^{-1}$
& $7.368\times10^{-2}$
& $\mathbf{5.980\times10^{-2}}$ \\

Pillars
& $9.242\times10^{-2}$
& $2.900\times10^{-1}$
& $9.190\times10^{-2}$
& $2.335\times10^{-1}$
& $8.134\times10^{-2}$
& $\mathbf{7.128\times10^{-2}}$ \\

Slab-$\times1$
& $7.981\times10^{-2}$
& $8.043\times10^{-2}$
& $5.356\times10^{-2}$
& $4.636\times10^{-2}$
& $5.258\times10^{-2}$
& $\mathbf{4.435\times10^{-2}}$ \\

Slab-$\times3$
& $2.980\times10^{-1}$
& $2.980\times10^{-1}$
& $2.618\times10^{-1}$
& $1.057\times10^{0}$
& $2.688\times10^{-1}$
& $\mathbf{2.297\times10^{-1}}$ \\

Pillar-$\times3$
& $1.024\times10^{-1}$
& $1.027\times10^{-1}$
& $9.213\times10^{-2}$
& $1.217\times10^{-1}$
& $8.291\times10^{-2}$
& $\mathbf{7.488\times10^{-2}}$ \\

Cornell Glass
& $5.445\times10^{-3}$
& $4.974\times10^{-2}$
& $\mathbf{5.435\times10^{-3}}$
& $8.887\times10^{-3}$
& $6.675\times10^{-3}$
& $6.491\times10^{-3}$ \\
\bottomrule
\end{tabular}%
}
\end{table}

At 60~SPP, \method{} has the lowest median MSE on \textbf{13/14} scenes; SMS is lower on \textsc{Cornell Glass}.  Across all five budgets,
\method{} has the lowest median in \textbf{62/70} scene--SPP cells, and its
median curve is non-increasing at every adjacent budget on \textbf{12/14}
scenes.  It has the lowest median at all five budgets on 11/14 scenes; nine
scenes satisfy both this condition and monotonicity.
Appendix~\ref{app:phase47-outcome} gives the complete rank and pairwise-win
counts, while Appendix Table~\ref{tab:phase47-combined-mse} gives every 60-SPP
median, mean, and sample standard deviation.

The only median upturns for \method{} occur on \textsc{Lamp}, from 0.001974 at
24~SPP to 0.001997 at 36~SPP, and on \textsc{Plane}, from 0.2749 to 0.2758
between 12 and 24~SPP and from 0.2725 to 0.2728 between 48 and 60~SPP.  These
small increases are retained in the appendix panels.

\subsection{Discussion}
\label{sec:results-discussion}

The evaluation supports a focused conclusion: on this controlled benchmark,
\method{} usually gives lower linear-HDR error per scheduled camera sample than
the five comparison methods.  The result is strongest at 60~SPP and is broadly
consistent across budgets, while \textsc{Cornell Glass} and the small
\textsc{Lamp}/\textsc{Plane} upturns identify the observed exceptions.

The methods operate under different learning regimes.  \method{} and NPM are trained offline and frozen before rendering, MPG adapts online, and PPG updates its guiding state during rendering; PT and the fixed-chain SMS diagnostic use
fixed configurations.  The experiment therefore controls scheduled camera samples. 

NPM accesses trainable spatial grids sparsely, whereas \method{} evaluates a dense multilayer perceptron. Execution details appear in Appendix~\ref{app:execution-environment}.

%% file: sections/limitations.tex
\section{Scope and Limitations}
\label{sec:limitations}

Our evidence supports a focused descriptive claim: over the ten runs on this controlled 14-scene benchmark, \method{}
usually has lower median linear-HDR error per scheduled camera sample than the
five comparators.  Equal SPP does not imply equal wall-clock cost; the neural
guide is fitted offline per scene, its replay/training cost is excluded.  MPG performs online fitting
inside its recorded schedule.  These are distinct operating regimes rather
than a matched-time comparison.

Quantitative results use $64\times64$
images and one fixed $32{,}768$-SPP reference seed.  Several scenes share base
geometry, and some benchmark families informed development, limiting claims of
scene-level independence or out-of-distribution generalization. These boundaries motivate further research on matched-time, reference-convergence, clamp, and topology research for different SPP observations.

%% file: sections/conclusion.tex
\section{Conclusion}
\label{sec:conclusion}
\method{} models the complete low-dimensional seed of a specular chain: its
length, ordered reflection/transmission topology, and topology-conditioned
direction.  The controlled evaluation compares \method{}, NPM, MPG,
fixed-chain SMS, PPG, and PT on 14 scenes at
$\{12,24,36,48,60\}$ SPP. Across 10 random seeds, \method{} has the
lowest median MSE in 62/70 scene--SPP cells and in 13/14 scenes at 60~SPP.  It
records 335/350 strict pairwise median wins over the five comparators across
all budgets and 68/70 at 60~SPP; its median curve is non-increasing on 12/14
scenes.  The only median upturns are small changes on \textsc{Lamp} and
\textsc{Plane}, and \textsc{Cornell Glass} is the sole 60-SPP rank exception. The results establish a sample-budget comparison, not
a matched-time comparison, and support \method{} as a compact proposal for
structured-caustic transport.

%% file: sections/reproducibility.tex
\section*{Reproducibility statement}
The supplementary artifacts record the scenes, seeds, model exports, renderer
settings, reference hashes, and per-image metric rows.  Together with
Section~\ref{sec:experiments} and the complete pseudocode and proof in
Appendix~\ref{app:ideal-estimator}, they distinguish the ideal estimator from
the finite-work profile evaluated in the paper. The artifact
contains the complete $14\times6\times5\times10=4{,}200$ rows for ART-MPG,
NPM, MPG, fixed-chain SMS, PPG, and PT at
$\{12,24,36,48,60\}$ SPP.  The reported curves are ordinary cellwise medians
without dispersion bands.  Their 60-SPP numeric annotations and the appendix
table retain raw sample standard deviations with denominator $n-1$.

\section*{AI use statement}
Generative AI assisted with pseudocode,
analysis/visualization code, and consistency checks, and performed the
disclosed unblinded visual QA.  The
authors verified AI-assisted content against source code, execution receipts,
tests, and rendered artifacts; AI was not used to synthesize measured values
or replace renderer outputs.  The authors take responsibility for the final
manuscript and artifacts.

%% file: sections/appendix.tex
\setcounter{figure}{0}
\renewcommand{\thefigure}{A\arabic{figure}}
\setcounter{table}{0}
\renewcommand{\thetable}{A\arabic{table}}
\raggedbottom

\input{sections/appendix_results}
\input{sections/appendix_implementation}
\input{sections/appendix_theory}

%% file: sections/appendix_results.tex
\section{Additional Results}
\label{app:audit-details}

\subsection{Evaluation grid}
\label{app:phase47-grid}
  The reported evaluation compares \method{}, NPM, MPG,
  fixed-chain SMS,
  PPG, and PT at $\{12,24,36,48,60\}$ SPP on fourteen
  configurations:
  \textsc{Living}, \textsc{Lamp}, \textsc{Plane},
  \textsc{Slabs},
  \textsc{Stone}, \textsc{Pool}, \textsc{Sphere},
  \textsc{Water},
  \textsc{Ring-and-Slab}, \textsc{Pillars},
  \textsc{Slab-$\times1$},
  \textsc{Slab-$\times3$}, \textsc{Pillar-$\times3$},
  and
  \textsc{Cornell Glass}. Every quantitative cell is
  full-frame MSE on raw
  linear-HDR RGB at $64\times64$ resolution against the
  same fixed
  32,768-SPP numerical reference for that scene.

The SMS curve is the hash-bound fixed-chain configuration used by this sweep:
the established \textsc{Lamp}/\textsc{Slabs} presets are retained, and the
remaining scenes use the frozen descriptor recorded by the protocol.  We
therefore interpret SMS as a fixed-chain diagnostic, not as a variable-chain
SMS result.

The ten fixed runs are denoted D01--D10. Each run
  contains the complete
  $14\times6\times5$ grid, yielding
  $14\times6\times5\times10=4{,}200$ measured images.
  The same run IDs are
  used for all methods and scenes. 

\subsection{Median outcome}
\label{app:phase47-outcome}
ART-MPG has the lowest median MSE in 62/70 scene--SPP cells, including 13/14
scenes at 60~SPP.  Its median curve is non-increasing on 12/14 scenes and
strictly rank~1 at every displayed budget on 11/14 scenes; nine scenes satisfy
both properties.  It records 335/350 strict pairwise median wins against the
five comparators across all budgets and 68/70 at 60~SPP.  \textsc{Cornell
Glass} is the sole 60-SPP rank exception.

The only ART-MPG median upturns are \textsc{Lamp}, from 0.001974 at 24~SPP to
0.001997 at 36~SPP, and \textsc{Plane}, from 0.2749 to 0.2758 between 12 and
24~SPP and from 0.2725 to 0.2728 between 48 and 60~SPP.  All remain in the panels below.  

\subsection{Complete 60-SPP descriptive summary}
\label{app:phase47-summary-table}
  Table~\ref{tab:phase47-combined-mse} reports the 60-
  SPP median, arithmetic
  mean, and sample standard deviation for every
  scene--method pair. The main
  paper's claims and curves use medians; the other two
  statistics are included
  to describe variation across D01--D10.

\clearpage
\begingroup
\iclrfinaltrue
\input{figures/phase47_d1_d10_statistics_v3/tables/combined_mse_statistics_60spp.tex}
\clearpage
\endgroup

\subsection{Remaining ten median panels}
\label{app:phase47-median-panels}
Figures~\ref{fig:phase47-median-living-lamp}--
\ref{fig:phase47-median-pillar-cornell} report the ten scenes not shown in
Figure~\ref{fig:phase47-median-main-scenes}.  Each compact strip places a
reference overview, two visual-only D01 native $288\times288$ crops per method
from a $1152\times1152$ 60-SPP film, and the quantitative curve side by side.  The crop is not
the median image.  Curve centers and numeric labels summarize D01--D10
$64\times64$ full-frame MSE: each curve shows the median only.  Each method's 60-SPP numeric annotation additionally lists
raw sample SD (denominator $n-1$).

\begin{figure}[!htbp]
  \centering
  \includegraphics[width=\linewidth]{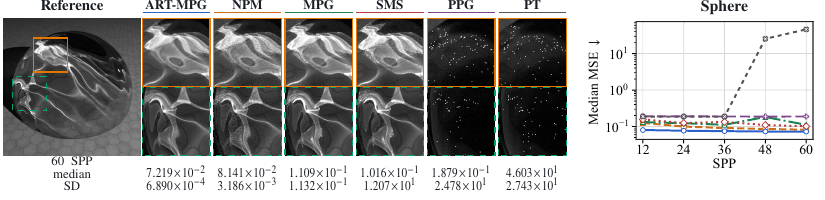}\par\vspace{0.05em}
  \includegraphics[width=\linewidth]{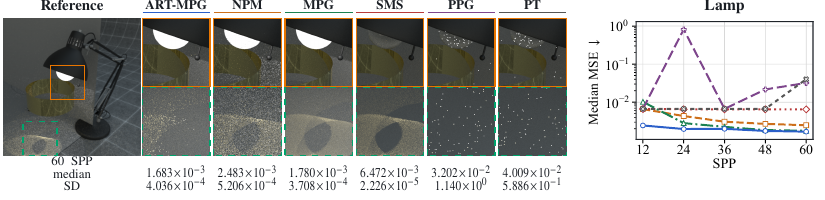}
  \caption{Median MSE summaries for
  \textsc{Sphere} (top) and \textsc{Lamp} (bottom).}
  \label{fig:phase47-median-living-lamp}
\end{figure}

\begin{figure}[p]
  \centering
  \includegraphics[width=\linewidth]{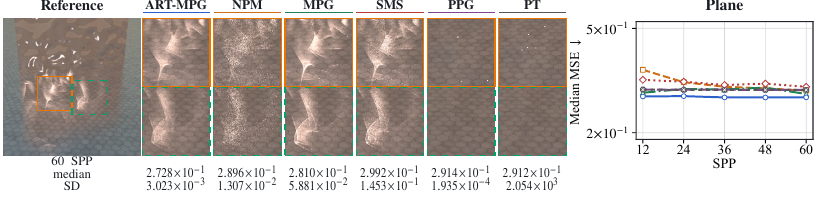}\par\vspace{0.05em}
  \includegraphics[width=\linewidth]{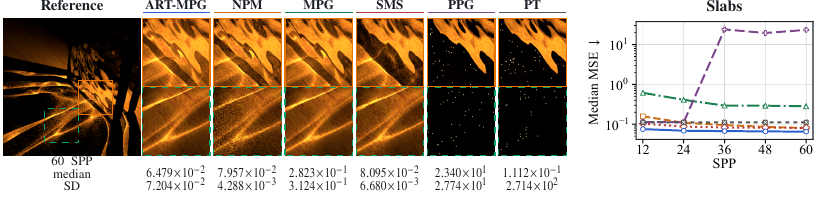}
  \caption{Median MSE summaries for
  \textsc{Plane} (top) and \textsc{Slabs} (bottom).}
  \label{fig:phase47-median-plane-slabs}
\end{figure}

\begin{figure}[p]
  \centering
  \includegraphics[width=\linewidth]{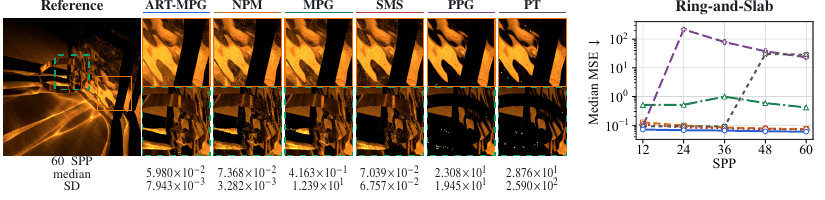}\par\vspace{0.05em}
  \includegraphics[width=\linewidth]{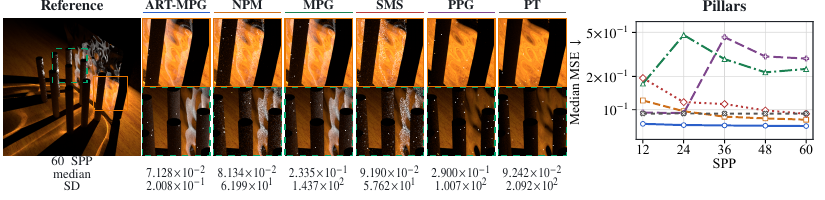}
  \caption{Median MSE summaries for
  \textsc{Ring-and-Slab} (top) and \textsc{Pillars} (bottom).}
  \label{fig:phase47-median-ring-pillars}
\end{figure}

\begin{figure}[p]
  \centering
  \includegraphics[width=\linewidth]{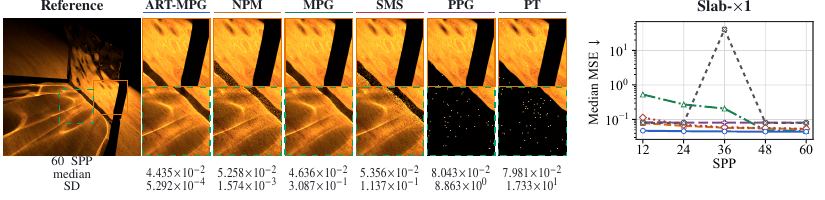}\par\vspace{0.05em}
  \includegraphics[width=\linewidth]{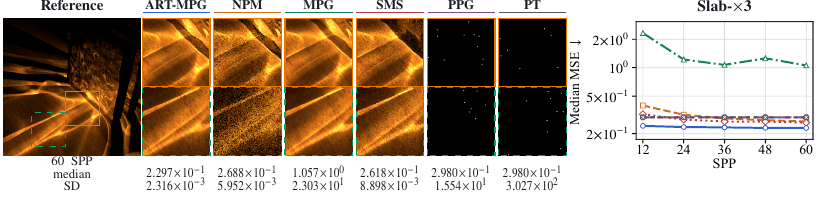}
  \caption{Median MSE summaries for
  \textsc{Slab-$\times1$} (top) and \textsc{Slab-$\times3$} (bottom).}
  \label{fig:phase47-median-slab-one-three}
\end{figure}

\begin{figure}[p]
  \centering
  \includegraphics[width=\linewidth]{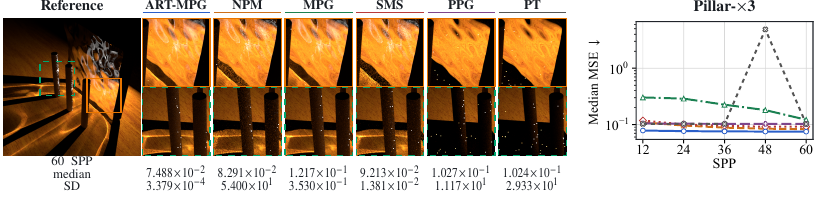}\par\vspace{0.05em}
  \includegraphics[width=\linewidth]{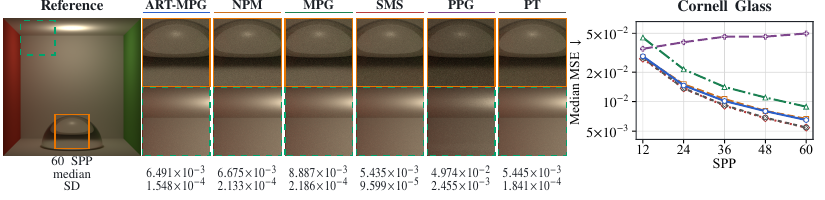}
  \caption{Median MSE summaries for
  \textsc{Pillar-$\times3$} (top) and \textsc{Cornell Glass} (bottom).}
  \label{fig:phase47-median-pillar-cornell}
\end{figure}

\clearpage

%% file: figures/phase47_d1_d10_statistics_v3/tables/combined_mse_statistics_60spp.tex
\begin{table}[p]
\centering
\footnotesize
\setlength{\tabcolsep}{4.0pt}
\renewcommand{\arraystretch}{1.05}
\caption{Combined 60-SPP 64$\times$64 full-frame linear-RGB MSE summaries across D01--D10 ($n=10$ per cell). Each method cell lists median, arithmetic mean, and sample standard deviation from top to bottom. Row-minimum medians and means are bold; SD is not ranked.}
\label{tab:phase47-combined-mse}
\resizebox{\textwidth}{!}{%
\begin{tabular}{lcccccc}
\toprule
& \multicolumn{5}{c}{Baselines} & \multicolumn{1}{c}{Ours} \\
\cmidrule(lr){2-6}\cmidrule(lr){7-7}
Scene & PT & PPG & SMS & MPG & NPM & ART-MPG \\
& \multicolumn{6}{c}{Cell order: median / mean / sample SD; MSE $\downarrow$} \\
\midrule
\rowcolor{black!6}
\textsc{Living} & \shortstack{$1.227\!\times\!10^{-3}$\\$1.423\!\times\!10^{-3}$\\$5.596\!\times\!10^{-4}$} & \shortstack{$4.710\!\times\!10^{-2}$\\$4.711\!\times\!10^{-2}$\\$2.403\!\times\!10^{-4}$} & \shortstack{$1.206\!\times\!10^{-3}$\\$1.769\!\times\!10^{-3}$\\$1.775\!\times\!10^{-3}$} & \shortstack{$1.676\!\times\!10^{-3}$\\$2.739\!\times\!10^{-3}$\\$2.046\!\times\!10^{-3}$} & \shortstack{$1.463\!\times\!10^{-3}$\\$1.593\!\times\!10^{-3}$\\$5.804\!\times\!10^{-4}$} & \shortstack{$\mathbf{1.003\!\times\!10^{-3}}$\\$\mathbf{1.008\!\times\!10^{-3}}$\\$5.380\!\times\!10^{-5}$} \\
\textsc{Lamp} & \shortstack{$4.009\!\times\!10^{-2}$\\$4.003\!\times\!10^{-1}$\\$5.886\!\times\!10^{-1}$} & \shortstack{$3.202\!\times\!10^{-2}$\\$6.495\!\times\!10^{-1}$\\$1.140\!\times\!10^{0}$} & \shortstack{$6.472\!\times\!10^{-3}$\\$6.468\!\times\!10^{-3}$\\$2.226\!\times\!10^{-5}$} & \shortstack{$1.780\!\times\!10^{-3}$\\$1.869\!\times\!10^{-3}$\\$3.708\!\times\!10^{-4}$} & \shortstack{$2.483\!\times\!10^{-3}$\\$2.493\!\times\!10^{-3}$\\$5.206\!\times\!10^{-4}$} & \shortstack{$\mathbf{1.683\!\times\!10^{-3}}$\\$\mathbf{1.776\!\times\!10^{-3}}$\\$4.036\!\times\!10^{-4}$} \\
\rowcolor{black!6}
\textsc{Plane} & \shortstack{$2.912\!\times\!10^{-1}$\\$6.499\!\times\!10^{2}$\\$2.054\!\times\!10^{3}$} & \shortstack{$2.914\!\times\!10^{-1}$\\$2.914\!\times\!10^{-1}$\\$1.935\!\times\!10^{-4}$} & \shortstack{$2.992\!\times\!10^{-1}$\\$3.396\!\times\!10^{-1}$\\$1.453\!\times\!10^{-1}$} & \shortstack{$2.810\!\times\!10^{-1}$\\$2.982\!\times\!10^{-1}$\\$5.881\!\times\!10^{-2}$} & \shortstack{$2.896\!\times\!10^{-1}$\\$2.903\!\times\!10^{-1}$\\$1.307\!\times\!10^{-2}$} & \shortstack{$\mathbf{2.728\!\times\!10^{-1}}$\\$\mathbf{2.721\!\times\!10^{-1}}$\\$3.023\!\times\!10^{-3}$} \\
\textsc{Slabs} & \shortstack{$1.112\!\times\!10^{-1}$\\$9.797\!\times\!10^{1}$\\$2.714\!\times\!10^{2}$} & \shortstack{$2.340\!\times\!10^{1}$\\$2.464\!\times\!10^{1}$\\$2.774\!\times\!10^{1}$} & \shortstack{$8.095\!\times\!10^{-2}$\\$8.245\!\times\!10^{-2}$\\$6.680\!\times\!10^{-3}$} & \shortstack{$2.823\!\times\!10^{-1}$\\$4.073\!\times\!10^{-1}$\\$3.124\!\times\!10^{-1}$} & \shortstack{$7.957\!\times\!10^{-2}$\\$\mathbf{8.116\!\times\!10^{-2}}$\\$4.288\!\times\!10^{-3}$} & \shortstack{$\mathbf{6.479\!\times\!10^{-2}}$\\$9.854\!\times\!10^{-2}$\\$7.204\!\times\!10^{-2}$} \\
\rowcolor{black!6}
\textsc{Stone} & \shortstack{$7.035\!\times\!10^{0}$\\$7.849\!\times\!10^{0}$\\$8.128\!\times\!10^{0}$} & \shortstack{$3.196\!\times\!10^{-2}$\\$7.869\!\times\!10^{0}$\\$1.335\!\times\!10^{1}$} & \shortstack{$2.558\!\times\!10^{-2}$\\$2.621\!\times\!10^{-2}$\\$1.952\!\times\!10^{-3}$} & \shortstack{$9.165\!\times\!10^{-2}$\\$2.728\!\times\!10^{-1}$\\$3.449\!\times\!10^{-1}$} & \shortstack{$2.505\!\times\!10^{-2}$\\$2.529\!\times\!10^{-2}$\\$1.318\!\times\!10^{-3}$} & \shortstack{$\mathbf{1.897\!\times\!10^{-2}}$\\$\mathbf{1.931\!\times\!10^{-2}}$\\$6.785\!\times\!10^{-4}$} \\
\textsc{Pool} & \shortstack{$1.126\!\times\!10^{0}$\\$1.722\!\times\!10^{0}$\\$1.857\!\times\!10^{0}$} & \shortstack{$6.030\!\times\!10^{-1}$\\$1.137\!\times\!10^{0}$\\$1.359\!\times\!10^{0}$} & \shortstack{$1.315\!\times\!10^{0}$\\$2.641\!\times\!10^{0}$\\$2.954\!\times\!10^{0}$} & \shortstack{$1.161\!\times\!10^{0}$\\$2.707\!\times\!10^{0}$\\$3.719\!\times\!10^{0}$} & \shortstack{$7.627\!\times\!10^{-1}$\\$1.425\!\times\!10^{0}$\\$2.204\!\times\!10^{0}$} & \shortstack{$\mathbf{7.196\!\times\!10^{-3}}$\\$\mathbf{7.214\!\times\!10^{-3}}$\\$4.944\!\times\!10^{-4}$} \\
\rowcolor{black!6}
\textsc{Sphere} & \shortstack{$4.603\!\times\!10^{1}$\\$4.691\!\times\!10^{1}$\\$2.743\!\times\!10^{1}$} & \shortstack{$1.879\!\times\!10^{-1}$\\$1.904\!\times\!10^{1}$\\$2.478\!\times\!10^{1}$} & \shortstack{$1.016\!\times\!10^{-1}$\\$3.945\!\times\!10^{0}$\\$1.207\!\times\!10^{1}$} & \shortstack{$1.109\!\times\!10^{-1}$\\$1.484\!\times\!10^{-1}$\\$1.132\!\times\!10^{-1}$} & \shortstack{$8.141\!\times\!10^{-2}$\\$8.188\!\times\!10^{-2}$\\$3.186\!\times\!10^{-3}$} & \shortstack{$\mathbf{7.219\!\times\!10^{-2}}$\\$\mathbf{7.229\!\times\!10^{-2}}$\\$6.890\!\times\!10^{-4}$} \\
\textsc{Water} & \shortstack{$4.613\!\times\!10^{-3}$\\$4.632\!\times\!10^{-3}$\\$4.338\!\times\!10^{-4}$} & \shortstack{$5.941\!\times\!10^{-3}$\\$5.824\!\times\!10^{-3}$\\$4.808\!\times\!10^{-4}$} & \shortstack{$2.289\!\times\!10^{-3}$\\$2.270\!\times\!10^{-3}$\\$1.387\!\times\!10^{-4}$} & \shortstack{$2.940\!\times\!10^{-3}$\\$2.991\!\times\!10^{-3}$\\$2.766\!\times\!10^{-4}$} & \shortstack{$2.245\!\times\!10^{-3}$\\$2.298\!\times\!10^{-3}$\\$1.541\!\times\!10^{-4}$} & \shortstack{$\mathbf{2.139\!\times\!10^{-3}}$\\$\mathbf{2.171\!\times\!10^{-3}}$\\$1.655\!\times\!10^{-4}$} \\
\rowcolor{black!6}
\textsc{Ring--and--Slab} & \shortstack{$2.876\!\times\!10^{1}$\\$1.085\!\times\!10^{2}$\\$2.590\!\times\!10^{2}$} & \shortstack{$2.308\!\times\!10^{1}$\\$2.080\!\times\!10^{1}$\\$1.945\!\times\!10^{1}$} & \shortstack{$7.039\!\times\!10^{-2}$\\$9.742\!\times\!10^{-2}$\\$6.757\!\times\!10^{-2}$} & \shortstack{$4.163\!\times\!10^{-1}$\\$5.683\!\times\!10^{0}$\\$1.239\!\times\!10^{1}$} & \shortstack{$7.368\!\times\!10^{-2}$\\$7.356\!\times\!10^{-2}$\\$3.282\!\times\!10^{-3}$} & \shortstack{$\mathbf{5.980\!\times\!10^{-2}}$\\$\mathbf{6.366\!\times\!10^{-2}}$\\$7.943\!\times\!10^{-3}$} \\
\textsc{Pillars} & \shortstack{$9.242\!\times\!10^{-2}$\\$7.273\!\times\!10^{1}$\\$2.092\!\times\!10^{2}$} & \shortstack{$2.900\!\times\!10^{-1}$\\$5.739\!\times\!10^{1}$\\$1.007\!\times\!10^{2}$} & \shortstack{$9.190\!\times\!10^{-2}$\\$1.832\!\times\!10^{1}$\\$5.762\!\times\!10^{1}$} & \shortstack{$2.335\!\times\!10^{-1}$\\$4.628\!\times\!10^{1}$\\$1.437\!\times\!10^{2}$} & \shortstack{$8.134\!\times\!10^{-2}$\\$1.968\!\times\!10^{1}$\\$6.199\!\times\!10^{1}$} & \shortstack{$\mathbf{7.128\!\times\!10^{-2}}$\\$\mathbf{1.345\!\times\!10^{-1}}$\\$2.008\!\times\!10^{-1}$} \\
\rowcolor{black!6}
\textsc{Slab-$\times 1$} & \shortstack{$7.981\!\times\!10^{-2}$\\$1.338\!\times\!10^{1}$\\$1.733\!\times\!10^{1}$} & \shortstack{$8.043\!\times\!10^{-2}$\\$4.282\!\times\!10^{0}$\\$8.863\!\times\!10^{0}$} & \shortstack{$5.356\!\times\!10^{-2}$\\$1.032\!\times\!10^{-1}$\\$1.137\!\times\!10^{-1}$} & \shortstack{$4.636\!\times\!10^{-2}$\\$1.478\!\times\!10^{-1}$\\$3.087\!\times\!10^{-1}$} & \shortstack{$5.258\!\times\!10^{-2}$\\$5.252\!\times\!10^{-2}$\\$1.574\!\times\!10^{-3}$} & \shortstack{$\mathbf{4.435\!\times\!10^{-2}}$\\$\mathbf{4.435\!\times\!10^{-2}}$\\$5.292\!\times\!10^{-4}$} \\
\textsc{Slab-$\times 3$} & \shortstack{$2.980\!\times\!10^{-1}$\\$1.439\!\times\!10^{2}$\\$3.027\!\times\!10^{2}$} & \shortstack{$2.980\!\times\!10^{-1}$\\$5.213\!\times\!10^{0}$\\$1.554\!\times\!10^{1}$} & \shortstack{$2.618\!\times\!10^{-1}$\\$2.619\!\times\!10^{-1}$\\$8.898\!\times\!10^{-3}$} & \shortstack{$1.057\!\times\!10^{0}$\\$8.427\!\times\!10^{0}$\\$2.303\!\times\!10^{1}$} & \shortstack{$2.688\!\times\!10^{-1}$\\$2.697\!\times\!10^{-1}$\\$5.952\!\times\!10^{-3}$} & \shortstack{$\mathbf{2.297\!\times\!10^{-1}}$\\$\mathbf{2.303\!\times\!10^{-1}}$\\$2.316\!\times\!10^{-3}$} \\
\rowcolor{black!6}
\textsc{Pillar-$\times 3$} & \shortstack{$1.024\!\times\!10^{-1}$\\$1.650\!\times\!10^{1}$\\$2.933\!\times\!10^{1}$} & \shortstack{$1.027\!\times\!10^{-1}$\\$3.635\!\times\!10^{0}$\\$1.117\!\times\!10^{1}$} & \shortstack{$9.213\!\times\!10^{-2}$\\$9.581\!\times\!10^{-2}$\\$1.381\!\times\!10^{-2}$} & \shortstack{$1.217\!\times\!10^{-1}$\\$2.380\!\times\!10^{-1}$\\$3.530\!\times\!10^{-1}$} & \shortstack{$8.291\!\times\!10^{-2}$\\$1.716\!\times\!10^{1}$\\$5.400\!\times\!10^{1}$} & \shortstack{$\mathbf{7.488\!\times\!10^{-2}}$\\$\mathbf{7.478\!\times\!10^{-2}}$\\$3.379\!\times\!10^{-4}$} \\
\textsc{Cornell Glass} & \shortstack{$5.445\!\times\!10^{-3}$\\$5.472\!\times\!10^{-3}$\\$1.841\!\times\!10^{-4}$} & \shortstack{$4.974\!\times\!10^{-2}$\\$4.897\!\times\!10^{-2}$\\$2.455\!\times\!10^{-3}$} & \shortstack{$\mathbf{5.435\!\times\!10^{-3}}$\\$\mathbf{5.465\!\times\!10^{-3}}$\\$9.599\!\times\!10^{-5}$} & \shortstack{$8.887\!\times\!10^{-3}$\\$8.908\!\times\!10^{-3}$\\$2.186\!\times\!10^{-4}$} & \shortstack{$6.675\!\times\!10^{-3}$\\$6.664\!\times\!10^{-3}$\\$2.133\!\times\!10^{-4}$} & \shortstack{$6.491\!\times\!10^{-3}$\\$6.475\!\times\!10^{-3}$\\$1.548\!\times\!10^{-4}$} \\
\bottomrule
\end{tabular}%
}
\end{table}

%% file: sections/appendix_implementation.tex
\section{Implementation Details}
\label{app:implementation}

\setcounter{table}{0}
\renewcommand{\thetable}{B\arabic{table}}

The public method name is \method{}; immutable manifests retain historical
method keys for provenance, but those keys do not denote a different
algorithm.  All image metrics are computed on raw linear-RGB OpenEXR images
rather than display PNGs.

\subsection{Feature construction, objective, and solver details}
\label[appendix]{app:implementation-details}

The 22-D feature vector is, in order: receiver position (3), sampled-emitter
position (3), their displacement $\delta$ (3), distance (1), receiver normal
$n_D$ (3), emitter normal $n_L$ (3), emitter direction $d_L$ (3), and
$\hat\delta^\top n_D$, $-\hat\delta^\top n_L$, and
$\hat\delta^\top d_L$.  Raw positions and distance are not normalized.  The
renderer sets $\hat\delta=0$ at its $10^{-6}$ zero-distance guard; the loader
uses $\delta/\max(\lVert\delta\rVert,10^{-6})$.  Ten length logits reserve
class zero; runtime sampling drops that class and renormalizes lengths 1--9.
Each R/T sigmoid probability is clipped only to machine epsilon, consistently
in sampling and mass evaluation.

Replay consists of successful solved chains exported by the online MPG-guided
manifold integrator and then aggregated and deduplicated.  For each record,
$\omega_b$ is the normalized direction from the receiver to the first vertex
of the converged chain, rather than the proposal direction before projection;
this follows MPG's sample-centred directional construction.  Ten scenes reuse
hash-verified legacy exports, while four added scenes use fresh five-seed
exports; acquisition budgets are therefore scene-specific rather than one
uniform fourteen-scene schedule.  If $G_z(a)$ denotes the RGB path-contribution
field before the receiver bidirectional scattering distribution function
(BSDF), $\widehat r^{-1}$ the export-time reciprocal
basin estimate, $p_{\rm off}$ the offset-normal density, and $q_{N,\rm exp}$ the
export length probability, the stored scalar is operationally
\begin{equation}
  e_b=\operatorname{mean}_{\rm RGB}\!\left(
  G_z(a)\widehat r^{-1}/p_{\rm off}\right)/q_{N,\rm exp}.
\end{equation}
Thus $e_b$ is an importance proxy rather than the full image contribution; in
particular, it omits the receiver BSDF.  The Python loader preserves this field
and applies only the following log transform.  With
$w_b=1+\log(1+\max(e_b,0))$, the exact per-record losses are
\begin{align}
  \ell_n(b)&=-\log q_\theta(n_b\mid x_b),\\
  \ell_t(b,U)&=-n_b\log q_\theta
  (t_{b,U+1}\mid t_{b,1:U},n_b,x_b),\quad
  U\sim\operatorname{Unif}\{0,\ldots,n_b-1\},
  \label{eq:mpgd-type-loss}\\
  \ell_\omega(b)&=-\log g_\theta(\omega_b\mid d_b,x_b).
\end{align}
Training minimizes
\begin{equation}
  \mathcal L=
  \frac{\sum_b w_b[\ell_n(b)+\ell_t(b,U_b)+
  \lambda_\omega\ell_\omega(b)]}{\sum_b w_b}.
  \label{eq:mpgd-training-loss}
\end{equation}
Because the sampled transition is multiplied by $n_b$, its expectation is the
sum of all next-bit NLL terms in that record.

The manifold solve uses angle-difference constraints, at most 20 damped Newton
iterations, and residual threshold $10^{-4}$.  Failed surface reprojections
halve the step.  The practical same-chain predicate is intentionally weaker
than the theorem's canonical identity: it requires the primary shape sequence,
packed type code, and first solved directions with
$|\omega_1'^\top\omega_1-1|<10^{-4}$.

\begin{algorithm}[H]
\caption{Offline prefix-autoregressive training for \method{}}
\label{alg:mpgd-training}
\begin{algorithmic}[1]
\Require Fixed replay $\mathcal R$, epoch count $E$, and direction coefficient
$\lambda_\omega$.
\State Split record groups into training/validation; initialize model and AdamW.
\State Set $w_b\gets1+\log(1+\max(e_b,0))$.
\For{$e=1,\ldots,E$}
  \For{each minibatch $\mathcal B$}
    \State Evaluate length NLL from the all-mask state.
    \State Sample one prefix per record and evaluate the scaled next-bit NLL.
    \State Reveal the completed descriptor and evaluate vMF-mixture NLL.
    \State Form objective~\ref{eq:mpgd-training-loss}; clip gradients; update.
  \EndFor
  \State Enumerate all validation prefixes; retain minimum validation loss.
\EndFor
\State \Return the frozen proposal $q_\theta$.
\end{algorithmic}
\end{algorithm}

\subsection{vMF density and sampling}
\label{app:vmf-details}

For $\kappa>0$, the vMF normalizer on $\mathbb S^2$ is
$\kappa/(4\pi\sinh\kappa)$, with continuous limit $1/(4\pi)$ at
$\kappa=0$~\citep{mardia2000directional}.  Training evaluates the mixture NLL
with log-sum-exp; runtime uses a stable log normalizer and linearly sums the
weighted component densities.  To sample one component, it draws
\begin{equation}
  w=1+\kappa^{-1}\log\!\left(u_1+(1-u_1)e^{-2\kappa}\right),
  \qquad \varphi=2\pi u_2,
\end{equation}
constructs $(\sqrt{1-w^2}\cos\varphi,\sqrt{1-w^2}\sin\varphi,w)$, and rotates
the north pole to $\mu$~\citep{wood1994vmf}.  Runtime guards the uniform draws
away from the inverse-CDF endpoints; for very small $\kappa$ it samples the
sphere uniformly.  Training forms each mean as
$r_\mu/\max(\lVert r_\mu\rVert,10^{-6})$.  Runtime uses the corresponding unit
direction when $\lVert r_\mu\rVert^2>10^{-12}$ and otherwise falls back to
$+z$.  Softmax mixture weights and clamped concentrations are shared across
training NLL, runtime sampling, and density evaluation; the mean conventions
therefore agree outside the stated degenerate region.

\subsection{Recorded reproducibility environment}
\label{app:execution-environment}

The execution receipts bind the worker Python environment, renderer binaries,
and OpenCV 4.11.0 EXR decoder.  The reported D01--D10 grid was rendered on one
host with 80 one-thread workers.  Binary, plugin, protocol, model, reference,
and output hashes define the reproducibility boundary.  Quantitative metrics
use $64\times64$ linear-RGB images.

\subsection{Result and figure provenance}
\label{app:spp-mse}

The result artifact contains the complete
$14\times6\times5\times10=4{,}200$-row matrix for D01--D10.  No per-scene
substitution, smoothing, interpolation, monotone fitting, exclusion, or
replacement is applied.  The statistics artifact computes ordinary cellwise
medians, arithmetic means, and sample standard deviations with denominator
$n-1$.  Publication curves show the median centerline without a dispersion
band; 60-SPP annotations and Table~\ref{tab:phase47-combined-mse} retain the
sample SD descriptively.  Source manifests and completion receipts bind the
run rows, statistics, and publication figures by SHA-256.  These summaries
describe dispersion across the ten fixed runs and are not uncertainty
intervals.

%% file: sections/appendix_theory.tex
\section{Theory}
\label{app:theory}

\subsection{Ideal conditional estimator}
\label[appendix]{app:ideal-estimator}
Assume that $\mathcal A(z)$ is countable,
$\sum_a\lVert F_z(a)\rVert<\infty$, and every contributing chain has
$q_Nq_TR_\theta>0$.  The complete state $z$ is exogenous to the sampled
descriptor and direction and is frozen across a primary draw and all retries;
$\bar\Phi_z$ is deterministic; and retries are independent with exact
canonical-chain identity.  This theorem directly covers deterministic offsets.
The executed glossy-offset routine instead draws offset normals after a
seed-dependent preliminary solve and corrects by their density; a proof for
that augmented seed space is not asserted here.

This result is the descriptor-conditioned specialization of the SMS/MPG
geometric reciprocal identity~\citep{zeltner2020specular,
fan2023manifold_path_guiding}.
\begin{theorem}[Descriptor-conditioned SMS/MPG re-hit identity]
\label{thm:conditional-rehit}
Draw $D\sim\pi_\theta(\cdot\mid\psi(z))$ and
$\Omega_0\sim g_\theta(\cdot\mid D,\psi(z))$, and set
$A=\bar\Phi_z(D,\Omega_0)$.  On success, keep $D,z$ fixed and redraw directions
until $\bar\Phi_z(D,\Omega')=A$.  If $J_d\in\{1,2,\ldots\}$ counts trials
including the successful re-hit, then the estimator in
Equation~\ref{eq:mpgd-unbiased-estimator} is unbiased for $I(z)$.
\end{theorem}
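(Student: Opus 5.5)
The plan is to condition on the primary outcome, compute the conditional expectation of the retry count, and then sum over chains. The primary draw $(D,\Omega_0)$ has law $q_\theta(\cdot\mid\psi(z))$, so for each $a\in\mathcal A(z)$,
\[
\Pr[A=a]=Q_\theta(a\mid z)=q_N(a)q_T(a)R_\theta(a\mid d(a),z),
\]
where the factorization follows from Equation~\ref{eq:problem-factorization}. It uses the fact that $\bar\Phi_z(d,\omega)=a$ forces $d=d(a)$. Hence the preimage $\bar\Phi_z^{-1}(\{a\})$ lies entirely in the single stratum $\{d(a)\}\times\mathbb S^2$, and the integral reduces to $\pi_\theta(d(a)\mid x)$ times the $g_\theta$-mass of the directional basin. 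I would state this stratum-containment observation explicitly, since it is what lets the exact descriptor mass be pulled out.

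Next, fix $a$ and condition on $\{A=a\}$, which implies $D=d(a)$. By assumption $z$ is frozen, the retries $\Omega_1,\Omega_2,\ldots$ are i.i.d.\ from $g_\theta(\cdot\mid d(a),\psi(z))$, and they are independent of $\Omega_0$. Each retry therefore succeeds, meaning $\bar\Phi_z(d(a),\Omega_j)=a$ under exact canonical identity, independently with probability $R_\theta(a\mid d(a),z)>0$. Consequently $J_d\mid\{A=a\}\sim\operatorname{Geom}(R_\theta)$, and $J_d$ is almost surely finite. By the geometric mean used after Proposition~\ref{prop:type-stratification}, $\mathbb E[J_d\mid A=a]=1/R_\theta$. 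Since $q_N,q_T$ and $F_z(a)$ are deterministic given $A=a$,
\[
\mathbb E\bigl[\widehat I_{\methodmath}\mid A=a\bigr]=\frac{F_z(a)}{q_Nq_TR_\theta}.
\]

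Finally, apply the tower property over the countable partition $\{A=\varnothing\}\cup\bigcup_a\{A=a\}$. The event $A=\varnothing$ contributes $0$. Each chain contributes $Q_\theta(a\mid z)\cdot F_z(a)/Q_\theta(a\mid z)=F_z(a)$, and the positivity assumption makes this valid for every contributing chain. Chains with $F_z(a)=0$ contribute zero whether or not they are reachable. Summing gives $I(z)$, with everything taken componentwise in RGB.

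The main obstacle is justifying the interchange of expectation and the countable sum. I would handle it by applying Tonelli to $\|\widehat I\|$: that bound gives $\mathbb E\|\widehat I\|=\sum_a\|F_z(a)\|<\infty$. Integrability then allows Fubini for the signed RGB components, which also confirms that $\widehat I$ is integrable.

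A secondary point needs care. The conditioning is well defined only because $\bar\Phi_z$ is deterministic and measurable, so each $\{A=a\}$ is an event of positive probability. Independence of the retries from $\Omega_0$ given $D$ must also be stated from the exogeneity and freezing of $z$.
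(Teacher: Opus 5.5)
Your proposal is correct and follows the paper's own proof. Both condition on the primary chain, use $J_d\mid\{A=a\}\sim\operatorname{Geom}(R_\theta)$ to get $\mathbb{E}[J_d\mid A=a]=1/R_\theta$, cancel this against $\Pr[A=a]=q_Nq_TR_\theta$, and sum over chains, with absolute summability justifying the interchange. Your stratum-containment remark and the Tonelli-then-Fubini step only make explicit details that the paper's one-line justification leaves implicit.
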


\begin{proof}
Conditioned on $A=a$ and $D=d(a)$, each retry succeeds with probability
$R_\theta(a\mid d(a),z)$, so
$J_d\sim\operatorname{Geom}(R_\theta)$ and
$\mathbb E[J_d\mid A=a]=1/R_\theta$.  Since
$Q_\theta(a\mid z)=q_Nq_TR_\theta$,
\begin{align*}
  \mathbb E[\widehat I_{\methodmath}\mid z]
  &=\sum_d\pi_\theta(d\mid\psi(z))
    \sum_{a:d(a)=d}R_\theta(a\mid d,z)
    \frac{F_z(a)}{\pi_\theta(d\mid\psi(z))}
    \frac{1}{R_\theta(a\mid d,z)}\\
  &=\sum_{a\in\mathcal A(z)}F_z(a)=I(z).
\end{align*}
Absolute summability justifies exchanging the sums and expectation.
\end{proof}

Proposition~\ref{prop:type-stratification} isolates our added length-plus-type
stratification relative to MPG's length-only form; repeated-trial unbiasedness
itself is inherited rather than claimed as new.

\begin{algorithm}[H]
\caption{Ideal uncapped conditional estimator}
\label{alg:mpgd-ideal}
\begin{algorithmic}[1]
\Require Frozen state $z$, features $x=\psi(z)$, normalized proposal
$q_\theta(d,\omega\mid x)$, filtered map $\bar\Phi_z$, and exact identity.
\State Draw $D\sim\pi_\theta(\cdot\mid x)$ and
$\Omega_0\sim g_\theta(\cdot\mid D,x)$.
\State Set $A\gets\bar\Phi_z(D,\Omega_0)$; if $A=\varnothing$, \Return $0$.
\State Set $J_d\gets0$.
\Repeat
  \State $J_d\gets J_d+1$; independently draw
  $\Omega'\sim g_\theta(\cdot\mid D,x)$.
  \State Set $A'\gets\bar\Phi_z(D,\Omega')$.
\Until{$A'$ has the exact canonical identity of $A$}
\State \Return $F_z(A)J_d/\pi_\theta(D\mid x)$.
\end{algorithmic}
\end{algorithm}

\subsection{Finite-work estimator boundary}
\label{app:finite-work-boundary}
Algorithm~\ref{alg:mpgd-finite-work-main} gives the implementation-matched
sampling procedure, including the offset-density and contribution guards.
The experiment profile fixes $M=2048$, $C_w=512$, and $C_f=32$.

Conditional on a successful primary solve and fixed practical target, let
$R_{\rm impl}$ be the probability that one independent retry satisfies the
implementation predicate.  Then
$J_{\rm impl}\sim\operatorname{Geom}(R_{\rm impl})$ and
$\widetilde J_M=J_{\rm impl}\ind\{J_{\rm impl}\le M\}$ obeys
\begin{equation*}
  \mathbb E[\widetilde J_M\mid A=a]
  =\frac{1-(M+1)(1-R_{\rm impl})^M
  +M(1-R_{\rm impl})^{M+1}}{R_{\rm impl}}.
\end{equation*}
Retry truncation alone therefore retains the fraction
$1-(1-R_{\rm impl})^M(1+MR_{\rm impl})$ relative to the practical
equivalence class.  This is distinct from the exact-identity theorem.  The clamps and
$q_N$ floor have distribution-dependent effects and are fixed parts of the
evaluated profile; they are not assumptions of Theorem~\ref{thm:conditional-rehit}.

\section{Extensions and Future Work}
\label{app:future-work}

\paragraph{Iterative guide refinement.}
ART-MPG learns from a fixed offline replay dataset. An
extension is to alternate sampling with the current guide
and teacher-guided proposal updates, drawing inspiration
from recursive on-policy distillation
\citep{yin2026recursive}. This would require a
transport-specific teacher and an objective that accounts
for the changing sampling distribution. The proposal would
remain fixed during each primary draw and its re-hit trials.

\paragraph{Adaptive selection of training scenes.}
Training across scenes raises the question of how to allocate
data collection toward configurations where the guide
performs poorly. One possible formulation is a zero-sum
interaction between a proposal learner and a scene selector,
with opposing objectives based on rendering error under a
fixed computational budget. If this interaction admits an
episodic Markov-game formulation, Bayesian learning methods
such as those of \citet{yueh2026bayesian} could provide a
starting point for studying exploration under uncertainty.

\paragraph{Decentralized allocation of sampling effort.}
A further extension is to assign separate proposal agents
to different transport strata and let them allocate effort
under shared computational constraints. If their utilities
depend on one another's allocations, this yields a strategic
resource-allocation problem. Generalized quantal response
equilibria and their decentralized learning algorithms
\citep{zhao2025efficient} offer a possible framework for
regularized allocation decisions. Establishing suitable
utilities and relating equilibrium allocations to rendering
variance and runtime remain open questions

%% file: references.bib
@inproceedings{kajiya1986rendering,
  author    = {Kajiya, James T.},
  title     = {The Rendering Equation},
  booktitle = {Proceedings of SIGGRAPH},
  pages     = {143--150},
  year      = {1986},
  doi       = {10.1145/15922.15902}
}

@inproceedings{veach1995optimally,
  author    = {Veach, Eric and Guibas, Leonidas J.},
  title     = {Optimally Combining Sampling Techniques for Monte Carlo Rendering},
  booktitle = {Proceedings of SIGGRAPH},
  pages     = {419--428},
  year      = {1995},
  doi       = {10.1145/218380.218498}
}

@article{jakob2012manifold,
  author  = {Jakob, Wenzel and Marschner, Steve},
  title   = {Manifold Exploration: A Markov Chain Monte Carlo Technique for Rendering Scenes with Difficult Specular Transport},
  journal = {ACM Transactions on Graphics},
  volume  = {31},
  number  = {4},
  year    = {2012},
  doi     = {10.1145/2185520.2185554}
}

@article{zeltner2020specular,
  author  = {Zeltner, Tizian and Georgiev, Iliyan and Jakob, Wenzel},
  title   = {Specular Manifold Sampling for Rendering High-Frequency Caustics and Glints},
  journal = {ACM Transactions on Graphics},
  volume  = {39},
  number  = {4},
  year    = {2020}
}

@article{fan2023manifold_path_guiding,
  title   = {Manifold Path Guiding for Importance Sampling Specular Chains},
  author  = {Fan, Zhimin and Hong, Pengpei and Guo, Jie and Zou, Changqing and Guo, Yanwen and Yan, Ling-Qi},
  journal = {ACM Transactions on Graphics},
  volume  = {42},
  number  = {6},
  articleno = {1},
  numpages  = {15},
  year    = {2023},
  doi     = {10.1145/3618360}
}

@inproceedings{dong2023npm,
  author    = {Dong, Honghao and Wang, Guoping and Li, Sheng},
  title     = {Neural Parametric Mixtures for Path Guiding},
  booktitle = {ACM SIGGRAPH Conference Proceedings},
  articleno = {29},
  pages     = {1--10},
  year      = {2023},
  doi       = {10.1145/3588432.3591533}
}

@inproceedings{muller2017practical,
  author    = {M{\"u}ller, Thomas and Gross, Markus and Nov{\'a}k, Jan},
  title     = {Practical Path Guiding for Efficient Light-Transport Simulation},
  booktitle = {Computer Graphics Forum},
  volume    = {36},
  pages     = {91--100},
  year      = {2017}
}

@article{wood1994vmf,
  author  = {Wood, Andrew T. A.},
  title   = {Simulation of the von Mises Fisher Distribution},
  journal = {Communications in Statistics -- Simulation and Computation},
  volume  = {23},
  number  = {1},
  pages   = {157--164},
  year    = {1994},
  doi     = {10.1080/03610919408813161}
}

@book{mardia2000directional,
  title     = {Directional Statistics},
  author    = {Mardia, Kanti V. and Jupp, Peter E.},
  publisher = {Wiley},
  year      = {2000}
}

@article{vorba2014online,
  author  = {Vorba, Ji{\v r}{\'i} and Karl{\'i}k, Ond{\v r}ej and {\v S}{\'i}k, Martin and Ritschel, Tobias and K{\v r}iv{\'a}nek, Jaroslav},
  title   = {On-line Learning of Parametric Mixture Models for Light Transport Simulation},
  journal = {ACM Transactions on Graphics},
  volume  = {33},
  number  = {4},
  articleno = {101},
  year    = {2014},
  doi     = {10.1145/2601097.2601203}
}

@article{herholz2016product,
  author  = {Herholz, Sebastian and Elek, Oskar and Vorba, Ji{\v r}{\'i} and Lensch, Hendrik P. A. and K{\v r}iv{\'a}nek, Jaroslav},
  title   = {Product Importance Sampling for Light Transport Path Guiding},
  journal = {Computer Graphics Forum},
  volume  = {35},
  number  = {4},
  year    = {2016}
}

@article{muller2019neural,
  author  = {M{\"u}ller, Thomas and McWilliams, Brian and Rousselle, Fabrice and Gross, Markus and Nov{\'a}k, Jan},
  title   = {Neural Importance Sampling},
  journal = {ACM Transactions on Graphics},
  volume  = {38},
  number  = {5},
  year    = {2019}
}

@article{ruppert2020robust,
  author  = {Ruppert, Lukas and Herholz, Sebastian and Lensch, Hendrik P. A.},
  title   = {Robust Fitting of Parallax-Aware Mixtures for Path Guiding},
  journal = {ACM Transactions on Graphics},
  volume  = {39},
  number  = {4},
  articleno = {147},
  year    = {2020}
}

@article{rath2020variance,
  author  = {Rath, Alexander and Grittmann, Pascal and Herholz, Sebastian and V{\'e}voda, Petr and Slusallek, Philipp and K{\v r}iv{\'a}nek, Jaroslav},
  title   = {Variance-Aware Path Guiding},
  journal = {ACM Transactions on Graphics},
  volume  = {39},
  number  = {4},
  articleno = {151},
  year    = {2020}
}

@article{hanika2015mnee,
  author  = {Hanika, Johannes and Droske, Marc and Fascione, Luca},
  title   = {Manifold Next Event Estimation},
  journal = {Computer Graphics Forum},
  volume  = {34},
  number  = {4},
  year    = {2015},
  doi     = {10.1111/cgf.12681}
}

@article{zheng2019primary,
  author  = {Zheng, Quan and Zwicker, Matthias},
  title   = {Learning to Importance Sample in Primary Sample Space},
  journal = {Computer Graphics Forum},
  volume  = {38},
  number  = {2},
  pages   = {169--179},
  year    = {2019},
  doi     = {10.1111/cgf.13628}
}

@article{bako2019offline,
  author  = {Bako, Steve and Meyer, Mark and DeRose, Tony and Sen, Pradeep},
  title   = {Offline Deep Importance Sampling for Monte Carlo Path Tracing},
  journal = {Computer Graphics Forum},
  volume  = {38},
  number  = {7},
  year    = {2019}
}

@article{fan2024specular,
  author  = {Fan, Zhimin and Guo, Jie and Wang, Yiming and Xiao, Tianyu and Zhang, Hao and Zhou, Chenxi and Chen, Zhenyu and Hong, Pengpei and Guo, Yanwen and Yan, Ling-Qi},
  title   = {Specular Polynomials},
  journal = {ACM Transactions on Graphics},
  volume  = {43},
  number  = {4},
  pages   = {1--13},
  year    = {2024},
  doi     = {10.1145/3658132}
}

@article{xu2026polynomial,
  title={Polynomial convergence of riemannian diffusion models},
  author={Xu, Xingyu and Zhang, Ziyi and Nakahira, Yorie and Qu, Guannan and Chi, Yuejie},
  journal={arXiv preprint arXiv:2601.02499},
  year={2026}
}

@article{xu2024provably,
  title={Provably robust score-based diffusion posterior sampling for plug-and-play image reconstruction},
  author={Xu, Xingyu and Chi, Yuejie},
  journal={Advances in Neural Information Processing Systems},
  volume={37},
  pages={36148--36184},
  year={2024}
}

@inproceedings{banerjee2025adaptive,
  title={Adaptive multimodal protein plug-and-play with diffusion-based priors},
  author={Banerjee, Amartya and Xu, Xingyu and Moosm{\"u}ller, Caroline and Lee, Harlin},
  booktitle={2025 IEEE 10th International Workshop on Computational Advances in Multi-Sensor Adaptive Processing (CAMSAP)},
  pages={226--230},
  year={2025},
  organization={IEEE}
}

@inproceedings{hoshinotraining,
  title={Training-Free Guided Diffusion for Planning: A Unified Framework via Doob’s h-Transform with Safety Guarantees},
  author={Hoshino, Kenta and Aluru, Yashaswi Shashank and Deng, Xiyu and Nakahira, Yorie},
  year={2026},
  booktitle={Forty-third International Conference on Machine Learning}
}

@misc{zhang2026hilbertgen3dhilbertmultifractalconditioningtopologyaware3d,
      title={HilbertGen-3D:Hilbert--Multifractal Conditioning for Topology-Aware 3D Generation}, 
      author={Ziyi Zhang and Shixuan Li and Yorie Nakahira and Paul Bogdan and Guannan Qu},
      year={2026},
      eprint={2610.22720},
      archivePrefix={arXiv},
      primaryClass={cs.CV},
      url={https://arxiv.org/abs/2610.22720}, 
}

@article{yin2026recursive,
  author  = {Yin, Shangjian and Zhao, Zehao and Asadi, Kavosh and Liu, Rui and Lu, Yuchen and Mei, Shike and Cui, Hang and Simon, Luke and Shi, Zhouxing and Firooz, Hamed},
  title   = {Recursive Self-Improvement via On-Policy Distillation for Reasoning},
  journal = {arXiv preprint arXiv:2609.30652},
  year    = {2026},
  doi     = {10.48550/arXiv.2609.30652},
  url     = {https://arxiv.org/abs/2609.30652}
}

@article{yueh2026bayesian,
  author  = {Yueh, Chang-Wei and Zhao, Andy and Nayyar, Ashutosh and Jain, Rahul},
  title   = {{Bayesian} Learning in Episodic Zero-Sum Games},
  journal = {arXiv preprint arXiv:2603.20604},
  year    = {2026},
  doi     = {10.48550/arXiv.2603.20604},
  url     = {https://arxiv.org/abs/2603.20604}
}

@article{zhao2025efficient,
  author  = {Zhao, Zehao and Shukla, Apurv and Jain, Rahul and Subramanian, Vijay},
  title   = {Efficient Decentralized Learning of Generalized Quantal Response Equilibrium},
  journal = {arXiv preprint arXiv:2507.09928},
  year    = {2025},
  doi     = {10.48550/arXiv.2507.09928},
  url     = {https://arxiv.org/abs/2507.09928}
}
